\newtheorem{lemma}{Lemma}
\newtheorem{theorem}{Theorem}
\newtheorem{proof}{Proof}
\newtheorem{corollary}{Corollary}
\begin{document}

\title{Improved Estimation of Class Prior Probabilities through Unlabeled
Data}

\author{Norman Matloff \\
       Department of Computer Science \\
       University of California, Davis \\
       Davis, CA  95616}


\maketitle

\begin{abstract} 

\noindent Work in the classification literature has shown that in
computing a classification function, one need not know the class
membership of all observations in the training set; the unlabeled
observations still provide information on the marginal distribution of
the feature set, and can thus contribute to increased classification
accuracy for future observations.  The present paper will show that this
scheme can also be used for the estimation of class prior probabilities,
which would be very useful in applications in which it is difficult or
expensive to determine class membership.  Both parametric and
nonparametric estimators are developed.  Asymptotic distributions of the
estimators are derived, and it is proven that the use of the unlabeled
observations does reduce asymptotic variance.  This methodology is also
extended to the estimation of subclass probabilities.  

\end{abstract}

%
%

\section{Introduction}
\label{intro}

There has been much work on the issue of unlabeled data in
classification problems.  Some papers, such as \cite{liu}, have taken
the point of view that the data is missing, due to some deficiency in
the data collection process, while others, such as \cite{nigam},
are aimed at situations in which some observations are
deliberately left unlabeled. 

The latter approach is motivated by the fact that in many applications
it is very difficult or expensive to determine class membership.  Thus
\cite{nigam} proposed that in part of the training set, class membership
be left undetermined.  The unlabeled observations would still provide
information on the marginal distribution of the features, and it was
shown that this information can contribute to increased classification
accuracy for future observations.

In the present work, it is again assumed that we have both labeled and
unlabeled data, but the focus is on estimation of the class prior
probabilities rather than estimation of the classification function.  We
wish to estimate those probabilities via a mixture of labeled and
unlabeled data, in order to economize on the time and effort needed to
acquire labeling information.

For example, consider the geographic application in \cite{blackard}.
Our class variable here is forest cover type, representing one of seven
classes.  Suppose we wish to estimate the population proportions of the
various cover types.  The authors note that a problem arises in that
cover data is generally ``directly recorded by field personnel or
estimated from remotely sensed data...[both of which] may be
prohibitively time consuming and/or costly in some situations.''
However, various feature variables are easily recorded, such as
elevation, horizontal distance to the nearest roadway and so on.
Previous work in the estimation of classification functions suggests
that we may use this other data, without labeling, as a means of
reducing the time and effort needed to determine class membership.

As another example, consider the patent analysis reported in
\cite{narin}.  The overall goal was to estimate the proportion of
patents whose research was publicly funded.  So, we would have just two
classes, indicating public funding or lack of it.  The determination of
which patents were publicly funded involved inspection of not only the
patents themselves, but also the papers cited in the patents, a very
time-consuming, human-intensive task.  However, using the approach
discussed here, we could define our features to consist
of some key words which have some predictive power for public funding
status, such as appearance of the word {\it university} in the Assignee
section of the patent document, and then use the feature data to help
achieve our goal of estimating the class probabilities.

Specifically, one might develop a classification rule from the labeled
observations, and then use the rule to classify the remaining
observations in the training set.  In other words, we would find
predicted forest cover types for each of the unlabeled observations.
Finally, one would obtain estimates for the proportions of forest cover
types in the full training set, using the predicted cover types for
those on which labels had not been collected.  Actually, one can do even
better by using estimated conditional class probabilities of the
unlabeled and labeled observations, as will be proven here.

Let us set some notation.  Suppose there are c classes, and let $Y =
(Y^{(1)},...,Y^{(c)})^T$ be the class identification vector, so that
$Y^{(j)}$ is 1 or 0, according to whether the item belongs to class
j.\footnote{This notation, traditional in the statistics literature,
differs from the machine learning custom of having $Y^{(j)}$ take on the
values $\pm 1$.  However, the statistics notation will be convenient
here in various ways, e.g.  that $P(Y^{(j)} = 1)$ is simply $EY^{(j)}$.}
Let $X = (X^{(1)},...,X^{(f)})^T$ be the feature vector.  For some
observations in our training set, we will have data on both $X$ and $Y$,
but others will be unlabeled, i.e. we will have data only on $X$.  In
our context here, our goal is to estimate $q_j = EY^{(j)}$, the prior
probability of class j.

The key point is that the unlabeled observations provide additional
information on the marginal distribution of $X$.  Since in the Law of
Total Expectation, $EY = E[E(Y|X)]$, the outer expectation is with
respect to X, the better our estimate of the marginal distribution of
$X$, the better our estimate of $EY$.  Thus using the two data sets in
concert can give us more accurate estimates of the target quantity,
$EY$, than can the labeled set alone.  In this paper such estimates will
be developed and analyzed.

It is important to note that that not only do we want to be able to
estimate $EY$, but we also may need a measure of the accuracy of our
estimates.  Most journals in the sciences, for example, want statistical
inference to accompany findings, in the form of confidence intervals
and hypothesis tests.  This issue is also addressed in this paper.

In addition, we may be interested in estimating subclass probabilities,
again using labeled and unlabeled data.  For instance, we may wish to
compare the rates of public funding of patents in the computer
field on one hand, and in the optics field on the other.

Section \ref{me} first notes that the statistical literature, such as
\cite{matloff} and \cite{muller}, contains some semiparametric
methodology which is related to our goals.  A typical case here might be
the popular logistic model,\cite{agresti}, especially in settings with
continuous features such as in the forest example.  Asymptotic
distributions are derived for our estimators of class prior
probabilities in Section \ref{mv}, and that section also proves that the
use of unlabeled data does reduce asymptotic variance in estimating the
priors.

The question then arises as to how much reduction in asymptotic variance
can be attained.  An analyst faced with a real data set must decide
whether the method proposed here will work well in his/her particular
setting. Is it worth collecting some unlabeled data?  In Section
\ref{assessing} techniques are developed for assessing this.  These
techniques may also be useful for the original application of unlabeled
data, which was to enhance the accuracy of classification function
estimators.  Section \ref{empirical} assesses how well the method
introduced here for estimating $EY$ works on some real data sets.

The remainder of the paper covers variations.  Section \ref{subclass}
addresses the subclass problem, and Section \ref{discrete} treats the
case of purely discrete features.  Asymptotic inference methodology is
again developed, and again it is proven that asymptotic variance is
reduced.  Finally, Section \ref{dsc} will discuss possible extensions of
these approaches.

\section{Basic Framework for Semiparametric Models}
\label{me}

Denote ithe $i^{th}$ observation in our data by $(Y_i,X_i)$.  Keep in
mind that $Y_i$ and $X_i$ are vectors, of length c and f.  We will
assume that the first r observations are labeled but the remainder are
not.  In other words, $Y_i$ is known for i = 1,...,r but unknown for i =
r+1,...,n.  Let $Y_i^{(j)}$ denote the value of $Y^{(j)}$ in observation
i, i.e. the value of component j in $Y_i$.

As statistical inference is involved, we will be determining asymptotic
distributions of various estimators, as $n \rightarrow \infty$.  It is
assumed that the ratio $r/n$ goes to the limit $\gamma$ as $n
\rightarrow \infty$.

The main analysis will be semiparametric.  This means that we assume
that it has been determined that the conditional class membership
probability functions belong to some parametric family $g(t,\theta)$,
i.e. that

\begin{equation}
P(Y^{(j)} = 1 | X = t) = g(x,\theta_j )
\end{equation}

\noindent
for some vector-valued parameter $\theta_j$, but we assume no
parametric model for the distribution of X.

In the material that follows, the reader may wish to keep in mind the
familiar logistic model, 

\begin{equation}
P(Y^{(j)} = 1 | X = t) = \frac{1}{1+e^{-\theta_j^T t}}
\end{equation}

\noindent
where t includes a constant component and T denotes matrix transpose.

The estimator $\hat{\theta}_j$ for the population value $\theta_j$ is
obtained from $(Y_1,X_1),...,(Y_r,X_r)$ via the usual iteratively
reweighted least squares method, with weight function

\begin{equation}
\label{wdef}
w(t,\theta_j) =
\frac{1}{Var(Y^{(j)}|X=t)} =
\frac{1}
{ g(t,\theta_j)
[1-g(t,\theta_j)]}
\end{equation}

\noindent
That is, the estimator is the root of

\begin{equation}
\label{root}
0 = \sum_{i=1}^r w(X_i,\hat{\theta}_j)
\{
Y_i^{(j)} - g(X_i,\hat{\theta}_j)
\} ~
g'(X_i,\hat{\theta}_j)
\end{equation}

\noindent
where $g'$ is the vector of partial derivatives of g with respect
to its second argument.

\noindent
Routines to perform the computation are commonly available, such as
the {\tt glm} function in the R statistical package, with the argument
{\bf family=binomial}.

Since the Law of Total Expectation implies that 

\begin{equation}
\label{ygivenx}
q_j = EY^{(j)} = 
E \left [ E(Y^{(j)}|X) \right ] =
E[g(X,\theta_j)]
\end{equation}

\noindent
we take as our new estimator of ${q_j}$ 

\begin{equation}
\label{qhatj}
\hat{q}_j = \frac{1}{n} \sum_{i=1}^n g(X_i,\hat{\theta}_j)
\end{equation}

\noindent
This will be compared to the classical estimator based on the labeled
data,

\begin{equation}
\label{ybarj}
\bar{Y}^{(j)} = \frac{1}{r} \sum_{i=1}^r Y_i^{(j)}
\end{equation}

\section{Asymptotic Distribution and Statistical Inference}
\label{mv}

We will assume that g satisfies the necessary smoothness conditions,
such as the ones given in \cite{jennrich}, and that the means exist and
so on. 

\begin{lemma}
\label{thm1}

Define

\begin{equation}
\label{ajdef}
A_j =  E 
\{
w(X,\theta_j) g'(X,\theta_j) g'(X,\theta_j)^T
\}
\end{equation}

\noindent
and

\begin{equation}
B_j = E[g'(X,\theta_j)] 
\end{equation}

\noindent
Then $n^{\frac{1}{2}} (\hat{q}_j - q_j)$ is asymptotically equivalent to

\begin{equation}
\label{qasy}
n^{-\frac{1}{2}}
[
\sum_{i=1}^n
\{ g(X_i,{\theta_j}) - q_j \}  ~ + \\
\gamma^{-\frac{1}{2}}
B_j^T 
A_j^{-1} 
\sum_{i=1}^r 
w(X_i,{\theta_j})
\{
Y_i^{(j)} - g(X_i,{\theta_j})
\} ~
g'(X_i,{\theta_j})
] 
\end{equation}

\end{lemma}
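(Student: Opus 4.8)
The plan is to split $\hat{q}_j-q_j$ into a piece that depends on the $X_i$ only through the known map $g(\cdot,\theta_j)$ at the \emph{true} parameter and a piece that carries the estimation error $\hat\theta_j-\theta_j$. Inserting $g(X_i,\theta_j)$ into \eqref{qhatj} gives
\begin{equation}
\hat{q}_j - q_j = \frac{1}{n}\sum_{i=1}^{n}\{g(X_i,\theta_j) - q_j\} + \frac{1}{n}\sum_{i=1}^{n}\{g(X_i,\hat{\theta}_j) - g(X_i,\theta_j)\}.
\end{equation}
By \eqref{ygivenx} the first sum has mean zero, and multiplied by $n^{1/2}$ it is already the first term of \eqref{qasy}; so the work is to show that the second sum, scaled by $n^{1/2}$, collapses to the $\hat\theta_j$-driven term. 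First I would apply the delta method to that second sum, Taylor expanding $g(X_i,\hat\theta_j)$ about $\theta_j$, turning it into $[\frac{1}{n}\sum_{i=1}^{n} g'(X_i,\theta_j)]^{T}(\hat\theta_j-\theta_j)$ plus a remainder, and then use the law of large numbers to replace the bracket by $B_j$.

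The core of the argument is the asymptotic linearization of $\hat\theta_j$. Reading \eqref{root} as an M-estimating equation with summand $\psi(Y_i^{(j)},X_i,\theta) = w(X_i,\theta)\{Y_i^{(j)} - g(X_i,\theta)\}\,g'(X_i,\theta)$, I would invoke the smoothness and regularity conditions of \cite{jennrich} to obtain $\sqrt{r}$-consistency together with the Bahadur-type representation
\begin{equation}
\hat{\theta}_j - \theta_j = \frac{1}{r}A_j^{-1}\sum_{i=1}^{r}\psi(Y_i^{(j)},X_i,\theta_j) + o_p(r^{-1/2}).
\end{equation}
The key computation is that the expected Jacobian of $\psi$ equals $-A_j$: differentiating $\psi$ produces three terms, but the two carrying the factor $Y_i^{(j)}-g(X_i,\theta_j)$ vanish in expectation, since $E[Y^{(j)}-g(X,\theta_j)\mid X]=0$ at the true parameter, leaving exactly $-E\{w\,g'g'^{T}\}=-A_j$ as in \eqref{ajdef}. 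That same mean-zero residual structure shows the randomness of the weight $w(X_i,\hat\theta_j)$ enters only the remainder, so evaluating $w$ at $\hat\theta_j$ rather than $\theta_j$ is harmless to first order.

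Combining the pieces, the scaled second sum becomes $n^{1/2}B_j^{T}(\hat\theta_j-\theta_j)$, and substituting the representation above leaves $B_j^{T}A_j^{-1}$ acting on a score summed over the $r$ labeled observations. The remaining step is purely one of bookkeeping the two normalizations: the first sum runs over all $n$ observations while the linearized score runs over the $r$ labeled ones, and reconciling the $\sqrt{r}$ scaling of the score with the overall $n^{-1/2}$ scaling through the ratio $r/n\to\gamma$ is exactly what produces the power of $\gamma$ in the second term of \eqref{qasy}. I would close by noting that the two resulting sums are asymptotically uncorrelated—again because $E[\psi\mid X]=0$ forces $\mathrm{Cov}(g(X_i,\theta_j),\psi_i)=0$—which is what makes the variance decomposition used in the subsequent results transparent.

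The main obstacle is the M-estimation step rather than the delta method: establishing consistency of $\hat\theta_j$ and controlling the Taylor remainders \emph{uniformly}, i.e. verifying the stochastic-equicontinuity and moment hypotheses under which the sample Jacobian converges to its expectation $-A_j$ and the $o_p$ terms are genuinely negligible. Once that standard machinery is secured, the delta-method expansion and the $\gamma$-scaling reconciliation are routine.
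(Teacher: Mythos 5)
Your proposal is correct and follows essentially the same route as the paper's own proof: the identical decomposition of $\hat{q}_j-q_j$ into the true-parameter sum plus the estimation-error term, a Taylor expansion whose Jacobian average is replaced by $B_j$ via the law of large numbers and Slutsky, and the standard Jennrich-type asymptotic linearization of $\hat{\theta}_j$ with influence matrix $A_j^{-1}$, reconciled through $r/n\to\gamma$. The only additions are your explicit verification that the expected Jacobian of the estimating function is $-A_j$ (which the paper delegates entirely to the citation) and the closing remark on uncorrelatedness, which the paper defers to the proof of the subsequent variance theorem.
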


\begin{proof}

\noindent
Write

\begin{equation}
n^{\frac{1}{2}} (\hat{q}_j - q_j) 
=  n^{-\frac{1}{2}} 
\sum_{i=1}^n
\left [ g(X_i,\hat{\theta}_j) - q_j \right ]
\end{equation}

\noindent
Form the Taylor expansion of $\sum_{i=1}^n g(X_i,\hat{\theta}_j)$ 
around the point $\theta_j$. Then there is a $\tilde{\theta}_j$ 
between $\theta_j$ and $\hat{\theta}_j$ such that

\begin{eqnarray}
\label{taylor}
n^{\frac{1}{2}} (\hat{q}_j - q_j) &=&
n^{-\frac{1}{2}}
\sum_{i=1}^n
\left [ g(X_i,{\theta_j}) - q_j \right ]  \\
&+&
\left [  
n^{-1} \sum_{i=1}^n g'(X_i,\tilde{\theta_j}) 
\right ]^T ~
n^{\frac{1}{2}} (\hat{\theta}_j - \theta_j ) \\
&=& C_n + D_n
\end{eqnarray}

\noindent
The $C_n$ term converges in distribution, by the Central Limit Theorem.
What about $D_n$?  It is well established, for example in \cite{jennrich}, 
that $n^{\frac{1}{2}} (\hat{\theta}_j - \theta_j)$ is asymptotically 
equivalent to

\begin{equation}
\label{thetahatequiv}
n^{-\frac{1}{2}} 
\gamma^{-\frac{1}{2}}
A_j^{-1} 
\sum_{i=1}^r w(X_i,{\theta_j})
\{
Y_i^{(j)} - g(X_i,{\theta_j})
\} ~
g'(X_i,{\theta_j})
\end{equation}

\noindent
Meanwhile, 

\begin{equation}
\lim_{n \rightarrow \infty}
\frac{1}{n}
\sum_{i=1}^n g'(X_i,\theta_j) =
B_j \textrm{ w.p. 1} 
\end{equation}

\noindent
Thus $D_n$ is asymptotically equivalent to

\begin{equation}
F_n =
n^{-\frac{1}{2}} 
\gamma^{-\frac{1}{2}}
B_j^T 
A_j^{-1} 
\sum_{i=1}^r 
w(X_i,{\theta_j})
\{
Y_i^{(j)} - g(X_i,{\theta_j})
\} ~
g'(X_i,{\theta_j})
] 
\end{equation}

\noindent
by Slutsky's Theorem [\cite{wasserman}].

So $D_n - F_n$ is $o_p(1)$, and thus the same is true for
$(C_n+D_n)-(C_n+F_n)$.  The result follows.

\end{proof}

We can now determine the asymptotic variance of $\hat{q}_j$:

\begin{theorem}

The asymptotic variance of $\widehat{q}_j$ is

\begin{equation}
\label{av}
AVar(\hat{q}_j) =
\frac{1}{n} 
Var[g(X,\theta_j)] +  \frac{1}{r} B_j^T A_j^{-1} B_j  
\end{equation}

\end{theorem}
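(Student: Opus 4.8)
The plan is to compute the variance directly from the linearization already established in the proof of Lemma~\ref{thm1}. Working from~(\ref{taylor}) and dividing through by $n^{1/2}$, that expansion gives $\hat q_j - q_j \approx P_1 + P_2$ with marginal part $P_1 = \frac{1}{n}\sum_{i=1}^n\{g(X_i,\theta_j)-q_j\}$ and estimation part $P_2 = B_j^T(\hat\theta_j-\theta_j)$, where in forming $P_2$ I replace $\frac1n\sum_{i=1}^n g'(X_i,\tilde\theta_j)$ by its almost-sure limit $B_j$ as justified in the proof of Lemma~\ref{thm1}. The asymptotic variance of $\hat q_j$ is then $\operatorname{Var}(P_1)+\operatorname{Var}(P_2)+2\operatorname{Cov}(P_1,P_2)$, so the work splits into three pieces, and the claimed formula will follow once the cross term is shown to vanish.

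The two variances are routine. Because the $X_i$ are i.i.d.\ and $E[g(X,\theta_j)]=q_j$ by~(\ref{ygivenx}), the summands of $P_1$ are centered, giving $\operatorname{Var}(P_1)=\frac1n\operatorname{Var}[g(X,\theta_j)]$, the first term of the target. For $P_2$ I would use $\operatorname{Var}(P_2)=B_j^T\,\operatorname{Cov}(\hat\theta_j)\,B_j$, so the only input needed is the asymptotic covariance of the IRLS estimator, $\operatorname{Cov}(\hat\theta_j)\approx\frac1r A_j^{-1}$ from~\cite{jennrich}. The one computation worth spelling out is why this covariance is $A_j^{-1}$ rather than a general sandwich: the single score summand $w(X,\theta_j)\{Y^{(j)}-g\}g'$ has covariance matrix $A_j$, since conditioning on $X$ and using $\operatorname{Var}(Y^{(j)}\mid X)=g(1-g)=1/w$ from~(\ref{wdef}) reduces the factor $w^2$ to $w$ and reproduces~(\ref{ajdef}); the inverse-variance weighting then collapses the sandwich $A_j^{-1}A_j A_j^{-1}$ to $A_j^{-1}$. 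Hence $\operatorname{Var}(P_2)=\frac1r B_j^T A_j^{-1} B_j$, the second term.

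The hard part will be $\operatorname{Cov}(P_1,P_2)=0$, since $P_1$ and $P_2$ are \emph{not} independent: both use the labeled observations $X_1,\dots,X_r$, so one cannot argue from independence of the labeled and unlabeled blocks. I would handle this with a conditioning (tower-property) argument. Since $P_2$ is asymptotically a linear combination of the score vectors $w(X_k,\theta_j)\{Y_k^{(j)}-g(X_k,\theta_j)\}g'(X_k,\theta_j)$, $k=1,\dots,r$, the cross-covariance is a sum of terms $\operatorname{Cov}\big(g(X_i,\theta_j)-q_j,\ w(X_k,\theta_j)\{Y_k^{(j)}-g(X_k,\theta_j)\}g'(X_k,\theta_j)\big)$. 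For $i\neq k$ these vanish by independence across observations; for a diagonal term $i=k\le r$, everything except the residual $Y_i^{(j)}-g(X_i,\theta_j)$ is a function of $X_i$, and that residual has conditional mean zero given $X_i$ by~(\ref{ygivenx}), so the conditional expectation, and hence the term, is zero. Therefore the two contributions are asymptotically uncorrelated, the variances add, and we obtain $AVar(\hat q_j)=\frac1n\operatorname{Var}[g(X,\theta_j)]+\frac1r B_j^T A_j^{-1}B_j$, as claimed.
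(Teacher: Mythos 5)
Your proposal is correct and follows essentially the same route as the paper: the same decomposition into a marginal part and a score part, the same tower-property argument showing the residual $Y^{(j)}-g(X,\theta_j)$ has conditional mean zero so the cross term vanishes, and the same use of $w^2\,\mathrm{Var}(Y^{(j)}\mid X)=w$ to collapse the sandwich $A_j^{-1}A_jA_j^{-1}$ to $A_j^{-1}$. The only cosmetic difference is that you itemize the off-diagonal ($i\neq k$) cross-covariances explicitly, which the paper leaves implicit.
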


\begin{proof}

\noindent
Write (\ref{qasy}) as $G_n + H_r$, where

\begin{equation}
G_n = 
n^{-\frac{1}{2}}
\sum_{i=1}^n
\{ g(X_i,{\theta_j}) - q_j \}  
\end{equation}

\noindent
and

\begin{equation}
H_r = 
r^{-\frac{1}{2}}
B_j^T 
A_j^{-1} 
\sum_{i=1}^r 
w(X_i,{\theta_j})
\{
Y_i^{(j)} - g(X_i,{\theta_j})
\} ~
g'(X_i,{\theta_j})
\end{equation}

Let us first establish that $G_n$ and $H_r$ are uncorrelated.  To show
this, let us find the expected value of the product of two minor terms.
Again applying the Law of Total Expectation, write

\begin{eqnarray}
\label{xprod0}
& & E \left [ g(X,\theta_j) \cdot
w(X,\theta_j)
\{ Y^{(j)} - g(X,\theta_j) \} 
g'(X,\theta_j) \right ] \nonumber \\ 
&=& E \left [ 
E \left (
g(X,\theta_j) 
w(X,\theta_j)
\{ Y(j) - g(X,\theta_j) \} 
g'(X,\theta_j)
| X \right ) \right ] \nonumber \\ 
&=& E 
\left [ 
g(X,\theta_j) 
w(X,\theta_j)
E \left (
Y^{(j)} - g(X,\theta_j) | X \right )
g'(X,\theta_j)
\right ] \nonumber \\
&=& 0
\end{eqnarray}

\noindent
from (\ref{ygivenx}).

For a random vector W, let Cov(W) denote the covariance matrix of W.
The Law of Total Expectation can be used to derive the relation

\begin{equation}
Cov(R) = E[Cov(R|S)] + Cov[E(R|S)]
\end{equation}

\noindent
for random vectors $R$ and $S$.  Apply this with $S = X$ and 

\begin{equation}
R = 
r^{-\frac{1}{2}}
B_j^T 
A_j^{-1} 
w(X,\theta_j)
\{ Y^{(j)} - g(X,\theta_j) \} 
g'(X,\theta_j)
\end{equation}

\noindent
Then $E(R|S) = 0)$.  Also

\begin{eqnarray}
Cov(R|S) &=& 
r^{-1}
B_j^T 
A_j^{-1} 
w(X,\theta_j)^2
\left [
Var(Y|X)
\right ]
g'(X,\theta_j)
g'(X,\theta_j)^T
A_j^{-1} 
B_j \\
&=& 
r^{-1}
B_j^T 
A_j^{-1} 
w(X,\theta_j)
g'(X,\theta_j)
g'(X,\theta_j)^T
A_j^{-1} 
B_j \\
\end{eqnarray}

\noindent
from (\ref{wdef}).  This yields 

\begin{equation}
E[Cov(R|S)]  =
r^{-1}
B_j^T 
A_j^{-1} 
B_j 
\end{equation}

\noindent
from (\ref{ajdef}).

Therefore the asympotic variance of $\hat{q}_j$ is as given in (\ref{av}).

\end{proof}

Statistical inference is then enabled, with $\hat{q}$ being
approximately normally distributed.  The standard error is obtained as
the square root of the estimated version of the quantities in
(\ref{av}), taking for instance

\begin{equation}
\label{varghat}
\widehat{Var}[g(X,\theta_j)] =
\frac{1}{n} \sum_{i=1}^n [g(X_i,\hat{\theta_j}) - \hat{q_j}]^2
\end{equation}

\noindent
An approximate 100(1-$\alpha$)\% confidence interval for $q_j$ is then
obtained by adding and subtracting $\Phi^{-1}(\alpha/2)$ times the
standard error, where $\Phi$ is the standard normal distribution
function.


\begin{corollary}

The asympotic variance of the estimator based on both the labeled and
unlabeled data, $\widehat{q}_j$, is less than or equal to that of the 
estimator based only on labeled data, $\bar{Y}^{(j)}$.  The 
inequality is strict as long as the random variable $g(X,\theta_j)$ 
is not constant.  

\end{corollary}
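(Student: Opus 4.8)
The plan is to compare the asymptotic variance in (\ref{av}) directly with the variance of the labeled-only estimator. Since the $Y_i^{(j)}$ are i.i.d.\ Bernoulli$(q_j)$, the variance of $\bar{Y}^{(j)}$ is exactly $\frac{1}{r}Var(Y^{(j)})$, with no asymptotics needed. First I would rewrite this using the Law of Total Variance together with the identities $E(Y^{(j)}|X) = g(X,\theta_j)$ from (\ref{ygivenx}) and $Var(Y^{(j)}|X) = 1/w(X,\theta_j)$ from (\ref{wdef}):
\begin{equation}
Var(Y^{(j)}) = E\left[Var(Y^{(j)}|X)\right] + Var\left[E(Y^{(j)}|X)\right] = E\left[\frac{1}{w(X,\theta_j)}\right] + Var[g(X,\theta_j)].
\end{equation}
Substituting this into $\frac{1}{r}Var(Y^{(j)})$ and subtracting (\ref{av}) gives
\begin{equation}
Var(\bar{Y}^{(j)}) - AVar(\hat{q}_j) = \frac{1}{r}\left(E\left[\frac{1}{w(X,\theta_j)}\right] - B_j^T A_j^{-1} B_j\right) + \left(\frac{1}{r} - \frac{1}{n}\right)Var[g(X,\theta_j)],
\end{equation}
so the problem reduces to showing that both terms on the right are nonnegative.

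The second term is immediate, since $r \le n$ forces $\frac{1}{r} - \frac{1}{n} \ge 0$ and a variance is nonnegative. The first term is the crux, and is where I expect the real work to lie: I would establish the matrix inequality $B_j^T A_j^{-1} B_j \le E[1/w(X,\theta_j)]$ by a Cauchy--Schwarz / Schur-complement argument. Introduce the scalar $a = w(X,\theta_j)^{-1/2}$ and the vector $b = w(X,\theta_j)^{1/2}\,g'(X,\theta_j)$, and stack them as $v = (a, b^T)^T$. Then $E[vv^T]$ is positive semidefinite, and computing its blocks with $E[a^2] = E[1/w]$, $E[ab] = E[g'] = B_j$, and $E[bb^T] = E[w g' g'^T] = A_j$ (the last from (\ref{ajdef})) yields
\begin{equation}
E[vv^T] = \begin{pmatrix} E[1/w(X,\theta_j)] & B_j^T \\ B_j & A_j \end{pmatrix}.
\end{equation}
Because $A_j$ is invertible (hence, being an expectation of $w\,g'g'^T$ with $w>0$, positive definite), the Schur complement of its lower-right block is nonnegative, i.e.\ $E[1/w(X,\theta_j)] - B_j^T A_j^{-1} B_j \ge 0$, which is exactly what is needed.

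For strictness I would argue as follows. When $g(X,\theta_j)$ is not constant, $Var[g(X,\theta_j)] > 0$; since genuine unlabeled data means $r < n$, the factor $\frac{1}{r} - \frac{1}{n}$ is strictly positive, so the second term alone makes the difference strictly positive, the first term being already $\ge 0$. It is worth flagging that the first term need \emph{not} contribute: for the logistic model $g' = g(1-g)\,t$ and $1/w = g(1-g)$, so $1/w$ equals the component of $g'$ corresponding to the constant term of $t$, forcing equality in the Schur complement; there the entire strict gain comes from the $r<n$ term. The main obstacle throughout is the Schur-complement step, and the remainder is bookkeeping with the Law of Total Variance.
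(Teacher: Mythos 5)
Your proof is correct, but it takes a genuinely different and more self-contained route than the paper. The paper's proof proceeds in two steps: it first observes that the labeled-only version of the semiparametric estimator has asymptotic variance obtained by setting $n=r$ in (\ref{av}), so that the unlabeled data contribute an improvement of exactly $\left(\tfrac{1}{r}-\tfrac{1}{n}\right)Var[g(X,\theta_j)]$, and it then simply \emph{cites} \cite{matloff} for the remaining inequality, namely that even with no unlabeled data the estimator (\ref{qhatj}) has asymptotic variance at most that of $\bar{Y}^{(j)}$. You instead prove that cited inequality from scratch: writing $Var(\bar{Y}^{(j)})=\tfrac{1}{r}\left(E[1/w]+Var[g]\right)$ via the Law of Total Variance and establishing $B_j^T A_j^{-1} B_j \le E[1/w(X,\theta_j)]$ by the Schur complement of the positive semidefinite matrix $E[vv^T]$ with $v=(w^{-1/2},\, w^{1/2}g'^{\,T})^T$ --- which is in essence the Cauchy--Schwarz argument underlying the 1981 result. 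Both arguments locate the strict part of the inequality in the same place, the term $\left(\tfrac{1}{r}-\tfrac{1}{n}\right)Var[g(X,\theta_j)]$ with $r<n$. What your version buys is transparency: the reader sees exactly why the labeled-only comparison holds and, as you nicely note, that for the logistic model with an intercept the Schur complement vanishes identically, so the \emph{entire} gain over $\bar{Y}^{(j)}$ comes from the unlabeled observations; the paper's version is shorter but leaves that structure hidden inside the citation.
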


\begin{proof}

\noindent
If we were to use only the labeled data, the asympotic variance of our
estimator of $q_j$ would be the result of setting n = r in (\ref{av}),
i.e.

\begin{equation}
\label{av2}
AVar(\hat{q}_j) = \frac{1}{r} 
\left [ Var\{g(X,\theta_j)\} + B_j^T A_j^{-1} B_j \right ]
\end{equation}

Comparing (\ref{av2}) and (\ref{av}), we see that use of the unlabeled
data does indeed yield a reduction in asympotic variance, by the amount
of 

\begin{equation}
\label{improvement}
\left ( \frac{1}{r} - \frac{1}{n} \right )  Var\{g(X,\theta_j)\} 
\end{equation}

It was proven in \cite{matloff} that even with no unlabeled data,
the asympotic variance of (\ref{qhatj}) is less than or equal to that of
$\bar{Y}^{(j)}$.  (This result was later sharpened and extended by
\cite{muller}.) The claimed relation then follows.

\end{proof}

\section{Assessing Potential Improvement}
\label{assessing}

Clearly, the stronger the relationship between $X$ and Y, the greater the
potential improvement that can accrue from use of unlabeled data.  This
raises the question of how to assess that potential improvement.

One approach would be to use a measure of the strength of the
relationship between $X$ and $Y$ itself.  A number of such measures have
been proposed, such as those described in \cite{nagelkerke}.  Many aim
to act as an analog of the $R^2$ value traditionally used in linear
regression analysis.  Here we suggest what appears to be a new measure,
whose form more directly evokes the spirit of $R^2$.

Consider first random variables U and V, with V being continuous.  If we
are predicting V from U, the population version of $R^2$ is

\begin{equation}
\label{rho2}
\rho^2 = \frac
{E \left [(V-EV)^2 \right ] - E \left [\{V-E(V|U)\}^2 \right ]
}
{E \left [(V-EV)^2 \right ]}
\end{equation}

\noindent
This is the proportional reduction in mean squared prediction error
attained by using U, versus not using it.\footnote{The quantity $\rho$
can be shown to be the correlation between V and $E(V|U)$.}  

The analog of this quantity in the case of binary V could be defined to
be the proportional reduction in classification error.  To determine
this, note that without U, we always predict V to be 1 if $EV \geq 0.5$;
otherwise our prediction is always 0.  The probability of
misclassification is then $\min(EV,1-EV)$.  Similarly, the conditional
probability of misclassification, given U, is $\min[E(V|U),1-E(V|U)]$,
and the unconditional probability, still using U, is the expected value
of that quantity.

\noindent
Accordingly, we take our binary-V analog of $R^2$ to be

\begin{equation}
\eta = \frac
{\min(EV,1-EV) - E \left [ \min[E(V|U),1-E(V|U) \right ]}
{\min(EV,1-EV)}
\end{equation}

\noindent
Applying Jensen's Inequality to the convex function $\phi(t) =
-\min(t,1-t)$, and using $EV = E[E(V|U)]$, we have that 

\begin{equation}
E \left [ \min[E(V|U),1-E(V|U) \right ] \leq \min(EV,1-EV)
\end{equation}

\noindent
Thus $0 \leq \eta \leq 1$, as with $R^2$.  One could use $\eta$ as a
means of assessing whether to make use of unlabeled data.  


Another approach, motivated by (\ref{improvement}), would be to use

\begin{equation}
\sigma_j = Var[g(X,\theta_j)
\end{equation}

\noindent
Note that in the extreme case $\sigma_j = 0$, $g(\cdot)$ is a constant
and $EV = E(V|U)$, so unlabeled data would be useless.  Note too that
since $0 \leq g \leq 1$, this measure $\sigma_j$ is in some sense
scale-free.  Accordingly, we might base our decision as to whether to 
use unlabeled data on (\ref{varghat}), making use of the unlabeled 
data if this quantity is sufficiently far from 0.

\section{Empirical Evaluation}
\label{empirical}

Rather than merely presenting some examples of our methodology's use on
real data, we aim here to assess the methodology's {\it performance} on
real data.  The bootstrap will be used for this
purpose.  (The reader should note that this is intended just a means of
assessing the general value of the methodology, rather than part of the
methodology itself.)

Suppose we wish to study the efficiency of estimating some population
value $\nu$ using $\widehat{\nu}_s$, an asymptotically unbiased function
of a random sample of size s from the population.  We are interested in
studying the efficiency of $\widehat{\nu}_s$ for various sample sizes s,
as measured by mean squared error

\begin{equation}
\label{msenu}
MSE(\widehat{\nu}_s) = E \left [ (\widehat{\nu}_s -\nu)^2 \right ]
\end{equation}

Now suppose we have an actual sample of size h from the population.  We
take this as the ``population,'' and then draw m subsamples of size s.
In the i$^{th}$ of these subsamples, we calculate $\widehat{\nu}_s$,
denoting its value by $\widehat{\nu}_{si}$.
The point is that the empirical c.d.f. of the $\widehat{\nu}_{si}$, i =
1,...,m, is an approximation to the true c.d.f. of $\widehat{\nu}_s$.
Moreover,

\begin{equation}
\widehat{MSE}(\widehat{\nu}_s) = 
\frac{1}{m} \sum_{i=1}^m (\widehat{\nu}_{si} - \widehat{\nu}_h)^2
\end{equation}

\noindent
is an estimate of the true mean squared estimation error, (\ref{msenu}).

This bootstrap framework was used to study the the efficiency of our
classification probability estimator $\widehat{q}_j$.  On each of three
real data sets, the ratio was computed of the estimated MSE for
$\widehat{q}_j$ to that resulting from merely using $\bar{Y}^{(j)}$ and
the labeled data.

Figure \ref{pima} below shows the estimated MSE ratios for the Pima
Indians diabetes data in the UCI Machine Learning Repository,
\cite{uci}.  (The figures here have been smoothed using the R {\bf
lowess()} function.) Here we are predicting whether a woman will develop
diabetes, using glucose and BMI as features.  The methodology developed
here brings as much as a 22 percent improvement.  

As expected, the improvement is better for larger values of n-r.  In
other words, for each fixed number r of labeled observations, the larger
the number n-r of unlabeled observations, the better we do.  On the
other hand, for each fixed number of unlabeled observations, the smaller
the number of labeled observations, the better (since n-r is a larger
proportion of r).

\begin{figure}[tb]
\centerline{               
\includegraphics[width=5.0in]{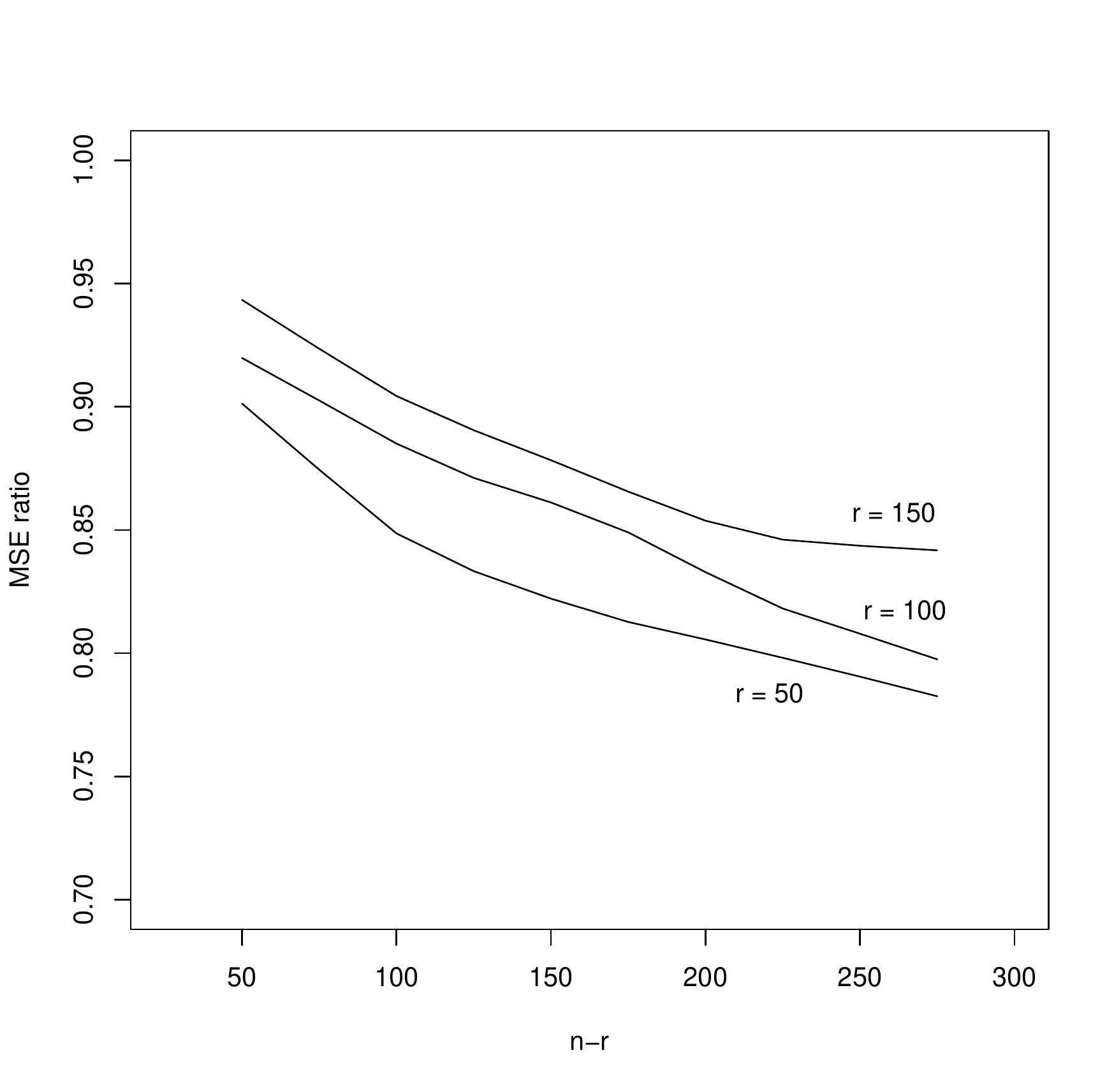}
}
\caption{Pima MSEs}  
\label{pima} 
\end{figure}

Figure \ref{abalone} shows similar behavior for another famous UCI data
set, on abalones.  We are predicting whether the animal has less than or
equal to nine rings, based on length and diameter.

\begin{figure}[tb]
\centerline{               
\includegraphics[width=5.0in]{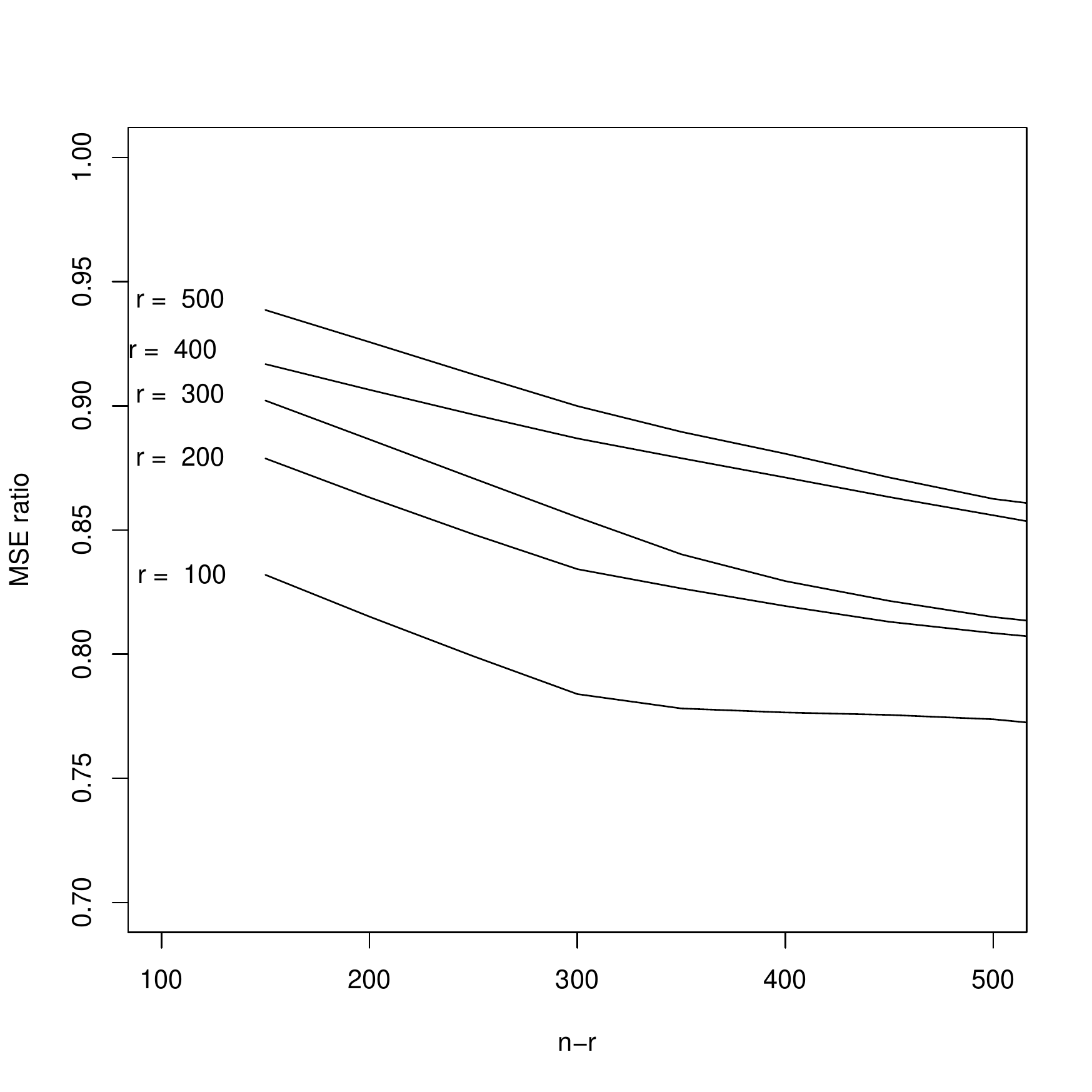}
}
\caption{Abalone MSEs}  
\label{abalone} 
\end{figure}


As pointed out in Section \ref{assessing}, our method here may produce
little or no improvement for some data sets.  Here is an example, from
the 2000 census data, \cite{pums}.  The data consist of records on
engineers and programmers in Silicon Valley, and we are using age and
income to predict whether the worker has a graduate degree.  The
results, seen in Figure \ref{pumsfig}, show that our methodology
produces essentially no improvement in this case.

\begin{figure}[tb]
\centerline{               
\includegraphics[width=5.0in]{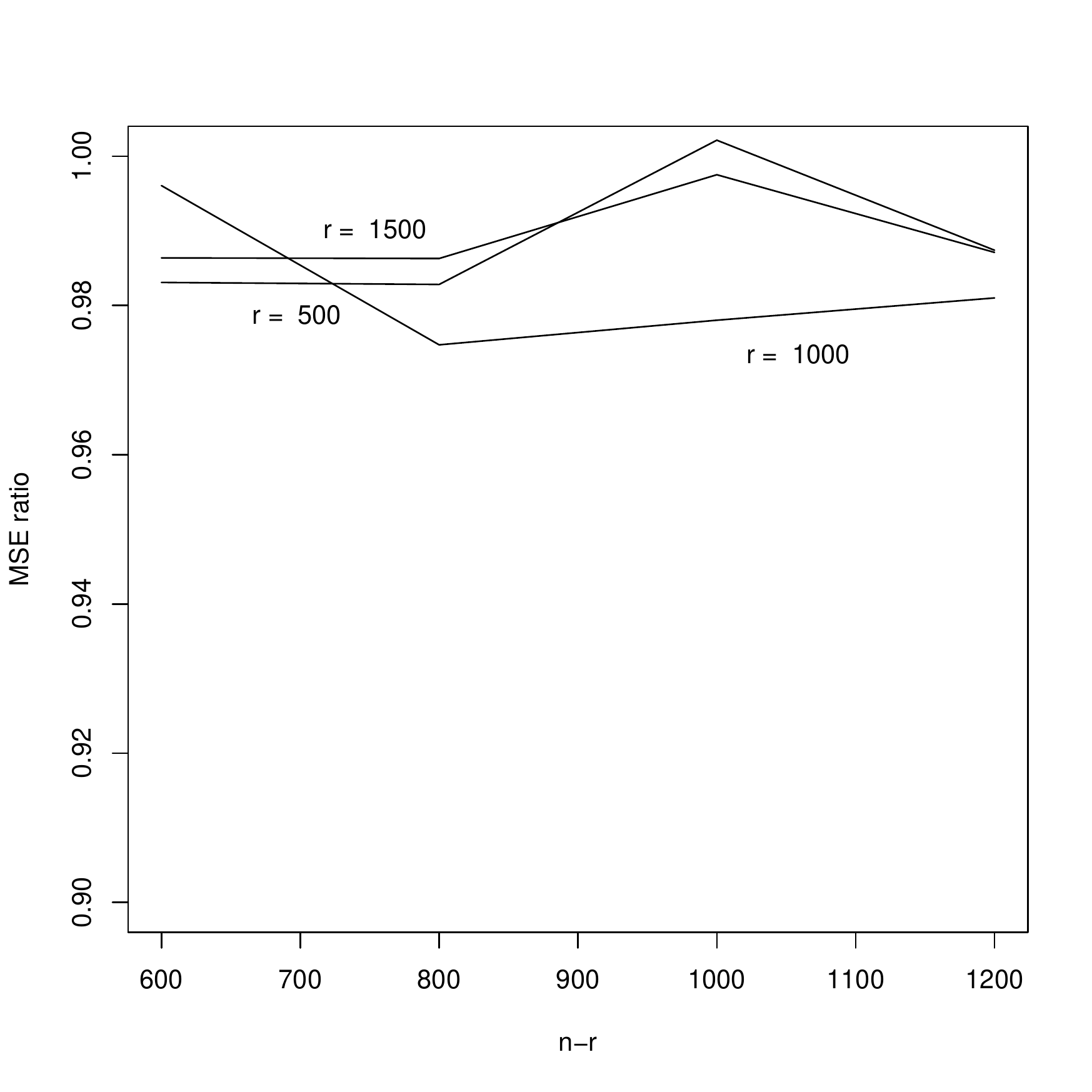}
}
\caption{PUMS MSEs}  
\label{pumsfig} 
\end{figure}

Now let us examine the two possible measures proposed in Section
\ref{assessing}, in the context of the above three data sets.  Table
\ref{worthit} shows the results, which are rather striking.  Even though
the misclassification rate, $\eta$, was approximately equal across the
three data sets, the value of $\sigma$ was an order of magnitude
smaller for PUMS than for the other two data sets.  The latter pattern
exactly reflects the pattern we saw in the MSE values in the figures.

In other words, classification power by itself does not suffice to
indicate whether our method yields an improvement.  It is also necessary
that the feature vector $X$ have a substantial amount of variation.  The
measure $\sigma_j$ provides a direct assessment of the potential for
improvement.

\begin{table}
\begin{center}
\begin{tabular}{|l|l|l|}
\hline
data set & $\hat{\eta}_j$ & $\hat{\sigma}_j$ \\ \hline 
Pima & 0.2403 & 0.0617 \\ \hline 
abalone & 0.2621 & 0.0740 \\ \hline 
PUMS & 0.2177 & 0.0063 \\ \hline 
\end{tabular}
\end{center}
\caption{Comparison of Measures Proposed in Section \ref{assessing}}
\label{worthit}
\end{table}

\section{Estimation of Subclass Probabilities} 
\label{subclass}

Here the focus will be on quantities of the form

\begin{equation}
\label{sbc}
q_{j,W} = P(Y^{(j)} = 1 | X \epsilon W)
\end{equation}

\noindent
for various sets W of interest.  It was mentioned earlier, for instance,
that one may wish to compare the proportions of public funding for
subcategories of patents, say computers and optics.  Here we outline
how the methodology developed earlier in this paper can be extended to
this situation, focusing on the semiparametric case.

Write (\ref{sbc}) as

\begin{equation}
q_{j,W} = \frac{E \left [Y^{(j)} I_W(X) \right ]}{P(X \epsilon W)}
\end{equation}

\noindent
where $I_W$ is the indicator function for membership in W.  By analogy
with (\ref{qhatj}), then, our estimator for $q_{j,W}$ is

\begin{equation}
\label{qhatjw}
\hat{q}_{j,W} = 
\frac
{\sum_{i=1}^n g(X_i,\hat{\theta}_j) I_W(X_i)}
{\sum_{i=1}^nI_W(X_i)}
\end{equation}

\noindent
The ``ordinary'' estimator, based only on labeled $Y$ values, is

\begin{equation}
\label{ordinary}
\bar{Y}^{(j,W)} = 
\frac
{\frac{1}{r}\sum_{i=1}^r I_W(X_i) Y_j^{(j)}}
{\frac{1}{n}\sum_{i=1}^nI_W(X_i)}
\end{equation}

\begin{theorem}

The asymptotic variance of $\hat{q}_{j,W}$ is

\begin{equation}
AVar(\hat{v}_{j,W}) = 
\frac{1}{n P(X \epsilon W)^2} 
Var\left [ g(X,\theta_j) I_W(X) \right ] + 
\frac{1}{r} 
C_j^T A_j^{-1} C_j 
\end{equation}

\end{theorem}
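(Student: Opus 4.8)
The plan is to recast $\hat{q}_{j,W}$ as a ratio estimator and reduce it to the machinery already developed for $\hat{q}_j$. Write $\hat{q}_{j,W}=N_n/M_n$, where $N_n=\frac{1}{n}\sum_{i=1}^n g(X_i,\hat{\theta}_j)I_W(X_i)$ and $M_n=\frac{1}{n}\sum_{i=1}^n I_W(X_i)$, with population counterparts $\mu_N=E[g(X,\theta_j)I_W(X)]$ and $\mu_M=P(X\in W)$; by the conditioning identity underlying (\ref{ygivenx}), $q_{j,W}=\mu_N/\mu_M$. Since $M_n\to\mu_M$ with probability $1$, a delta-method linearization gives $n^{1/2}(\hat{q}_{j,W}-q_{j,W})\approx \mu_M^{-1}\,n^{1/2}(N_n-\mu_N)-\mu_N\mu_M^{-2}\,n^{1/2}(M_n-\mu_M)$. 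The strategy is then to reduce the numerator fluctuation to the form handled in Lemma \ref{thm1}, collect the $X$-based and $\theta$-based pieces, and add their variances after verifying they are uncorrelated, exactly as in the proof for $\hat{q}_j$.

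Next I would expand $N_n$ as in Lemma \ref{thm1}. A Taylor expansion of $\frac{1}{n}\sum_i g(X_i,\hat{\theta}_j)I_W(X_i)$ about $\theta_j$ produces an $X$-only central-limit term $n^{-1/2}\sum_i\{g(X_i,\theta_j)I_W(X_i)-\mu_N\}$ plus a gradient term whose coefficient $\frac{1}{n}\sum_i g'(X_i,\tilde{\theta}_j)I_W(X_i)$ converges with probability $1$ to $E[g'(X,\theta_j)I_W(X)]$. Defining the conditional analog of $B_j$, namely $C_j=E[g'(X,\theta_j)\mid X\in W]=E[g'(X,\theta_j)I_W(X)]/P(X\in W)$, and substituting the known expansion (\ref{thetahatequiv}) for $n^{1/2}(\hat{\theta}_j-\theta_j)$, Slutsky's theorem shows $n^{1/2}(\hat{q}_{j,W}-q_{j,W})$ is asymptotically equivalent to a sum of an $X$-based term over all $n$ observations and a $\theta$-based term $C_j^T\,n^{1/2}(\hat{\theta}_j-\theta_j)$ driven by the $r$ labeled observations; the $\mu_M^{-1}$ factor from the ratio is precisely what converts $E[g'(X,\theta_j)I_W(X)]$ into $C_j$, which is why no $P(X\in W)$ factor survives in the second variance term.

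Finally I would compute the variance as in the preceding theorem. The two pieces are uncorrelated by the same conditional-mean-zero argument: the $X$-based term is a function of the $X_i$ alone, while the $\theta$-based term carries the factor $Y_i^{(j)}-g(X_i,\theta_j)$, whose conditional mean given $X$ is $0$ by (\ref{ygivenx}); hence the cross expectation vanishes. The $\theta$-based variance is then obtained exactly as before, applying $Cov(R)=E[Cov(R\mid S)]+Cov[E(R\mid S)]$ with $S=X$ and using (\ref{wdef}) and (\ref{ajdef}) to collapse $w^2\,Var(Y^{(j)}\mid X)$ first to $w$ and then to $A_j$, leaving $\frac{1}{r}C_j^T A_j^{-1}C_j$. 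The $X$-based term contributes $\frac{1}{n\,P(X\in W)^2}Var[g(X,\theta_j)I_W(X)]$, and summing yields the claimed expression.

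The main obstacle is the honest handling of the random denominator. Carrying the correction $-\mu_N\mu_M^{-2}\,n^{1/2}(M_n-\mu_M)$ through the $X$-based term would replace $g(X,\theta_j)I_W(X)$ by the centered quantity $\{g(X,\theta_j)-q_{j,W}\}I_W(X)$; the stated first term is recovered when $M_n$ is treated as its probability limit $P(X\in W)$, i.e. when the $1/P(X\in W)$ factor is pulled out as a constant and Lemma \ref{thm1} is applied verbatim. Confirming that this is the intended leading-order form — and noting that the denominator's fluctuation, depending on $X$ alone, is again uncorrelated with the $\theta$-term by (\ref{ygivenx}) — is the delicate point; the remainder is a transcription of the $\hat{q}_j$ argument with $g$ replaced by $gI_W$ and $B_j$ by $C_j$.
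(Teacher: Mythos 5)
Your proposal follows essentially the same route as the paper: Taylor-expand the numerator $\frac{1}{n}\sum_i g(X_i,\hat\theta_j)I_W(X_i)$ about $\theta_j$, substitute the expansion (\ref{thetahatequiv}) for $n^{1/2}(\hat\theta_j-\theta_j)$, dispose of the denominator by Slutsky's theorem, verify that the $X$-based and $\theta$-based pieces are uncorrelated via the conditional-mean-zero argument, and add variances. Two remarks. First, your normalization $C_j=E[g'(X,\theta_j)I_W(X)]/P(X\in W)$ differs from the paper's $C_j=E[g'(X,\theta_j)I_W(X)]$; with the paper's definition the factor $P(X\in W)^{-2}$ must multiply the second term as well (as it does in the final display of the paper's proof, but not in the theorem statement), so your convention is the one that makes the displayed formula internally consistent. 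Second, the ``delicate point'' you flag about the random denominator is genuine, and the paper does not resolve it: writing $\hat q_{j,W}-q_{j,W}=(\hat v_{j,W}-q_{j,W}M_n)/M_n$ with $M_n=\frac{1}{n}\sum_i I_W(X_i)$, the fluctuation $n^{1/2}(M_n-P(X\in W))$ is $O_p(1)$, not $o_p(1)$, so it cannot be discarded; carrying it along replaces $g(X,\theta_j)I_W(X)$ by the centered quantity $\{g(X,\theta_j)-q_{j,W}\}I_W(X)$ in the first variance term, exactly as you observe. The paper instead asserts that $n^{1/2}(\hat q_{j,W}-q_{j,W})$ is asymptotically equivalent to $P(X\in W)^{-1}n^{1/2}(\hat v_{j,W}-v_{j,W})$, which silently drops this term; since $Var[\{g(X,\theta_j)-q_{j,W}\}I_W(X)]=E[g^2I_W]-E[gI_W]^2/P(X\in W)$ while $Var[g(X,\theta_j)I_W(X)]=E[g^2I_W]-E[gI_W]^2$, the two agree only when $P(X\in W)=1$ or $E[gI_W]=0$. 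Your treatment is therefore the more careful one; you should commit to the centered first term rather than ``recovering'' the stated one by freezing $M_n$ at its limit.
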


\begin{proof}
\noindent
Since

\begin{equation}
\label{zz}
\lim_{n \rightarrow \infty}
{\frac{1}{n} \sum_{i=1}^nI_W(X_i)} = P(X \epsilon W) \textrm{ w.p. 1}
\end{equation}

\noindent
we can again apply Slutsky's Theorem and derive the asymptotic
distribution of $\hat{q}_{j,W}$.  Toward that end, define 

\begin{equation}
v_{j,W} = E \left [Y^{(j)} I_W(X) \right ]
\end{equation}

\noindent
and concentrate for the moment on estimating $v_{j,W}$, using

\begin{equation}
\hat{v}_{j,W} = 
\frac{1}{n} \sum_{i=1}^n g(X_i,\hat{\theta}_j) I_W(X_i)
\end{equation}

\noindent
The analog of (\ref{taylor}) is then

\begin{eqnarray}
n^{\frac{1}{2}} (\hat{v}_{j,W} - v_{j,W}) &=&
n^{-\frac{1}{2}}
\sum_{i=1}^n
\left [ g(X_i,{\theta_j}) I_W(X_i) - v_{j,W} \right ]  \\
&+&
\left [  
n^{-1} \sum_{i=1}^n g'(X_i,\tilde{\theta}_j) I_W(X_i)
\right ]^T ~
n^{\frac{1}{2}} (\hat{\theta}_j - \theta_j ) \nonumber
\end{eqnarray}

\noindent
for some $\tilde{\theta}_j$ between $\theta_j$ and $\hat{\theta}_j$. 

Proceeding as in (\ref{qasy}) we have that $n^{\frac{1}{2}}  (\hat{v}_{j,W} -
v_{j,W})$ is asympotically equivalent to

\begin{equation}
n^{-\frac{1}{2}}
[
\sum_{i=1}^n
\{ g(X_i,{\theta_j})I_{j,W}(X_i) - v_{j,W} \}  ~ + \\
\gamma^{-\frac{1}{2}}
C_j^T 
A_j^{-1} 
\sum_{i=1}^r 
w(X_i,{\theta_j}) 
\{
Y_i^{(j)} - g(X_i,{\theta_j})
\} ~
g'(X_i,{\theta_j})
] 
\end{equation}

\noindent
where

\begin{equation}
C_j = E \left [ g'(X,\theta_j) I_W(X_i) \right ]
\end{equation}

\noindent
Thus $n^{\frac{1}{2}}  (\hat{q}_{j,W} - q_{j,W})$ is asympotically equivalent 
to

\begin{equation}
n^{-\frac{1}{2}} \frac{1}{P(X \epsilon W)}
\left [
\sum_{i=1}^n
\{ g(X_i,{\theta_j})I_{j,W}(X_i) - v_{j,W} \}  ~ + \\
\gamma^{-\frac{1}{2}}
C_j^T 
A_j^{-1} 
\sum_{i=1}^r 
w(X_i,{\theta_j}) 
\{
Y_i^{(j)} - g(X_i,{\theta_j})
\} ~
g'(X_i,{\theta_j})
\right ] 
\end{equation}

Continuing as before, the desired result follows:

\begin{equation}
AVar(\hat{v}_{j,W}) = 
\frac{1}{P(X \epsilon W)^2} 
\left [
\frac{1}{n} Var\left [ g(X,\theta_j) I_W(X) \right ] + 
\frac{1}{r} 
C_j^T A_j^{-1} C_j 
\right ]
\end{equation}

\end{proof}

\begin{corollary}

The asympotic variance of the estimator based on both the labeled and
unlabeled data, $\widehat{q}_{j,W}$, is less than or equal to that of
the estimator based only on labeled data, $\bar{Y}^{(j,W)}$.  The
inequality as strict as long as the random variable $I_W(X)
g(X,\theta_j)$ is not constant.

\end{corollary}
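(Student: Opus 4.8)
The plan is to follow the template of the earlier corollary. The estimator that uses labeled data alone is obtained by restricting every sum in $\hat{q}_{j,W}$ to the $r$ labeled observations, which is the same as setting $n = r$ in the asymptotic variance supplied by the preceding theorem; this produces
\[
\frac{1}{r\,P(X \in W)^2}\left[ Var[g(X,\theta_j) I_W(X)] + C_j^T A_j^{-1} C_j \right]
\]
as the asymptotic variance of the labeled-only semiparametric estimator. Subtracting the full labeled-plus-unlabeled variance from this expression, the common $\frac{1}{r} C_j^T A_j^{-1} C_j$ terms cancel, leaving the reduction
\[
\left( \frac{1}{r} - \frac{1}{n} \right) \frac{1}{P(X \in W)^2}\, Var[g(X,\theta_j) I_W(X)],
\]
in exact analogy with (\ref{improvement}). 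Since $r \leq n$ and $P(X \in W)^2 > 0$, this is nonnegative, and it is strictly positive precisely when $Var[g(X,\theta_j) I_W(X)] > 0$, i.e. when $I_W(X) g(X,\theta_j)$ is not constant. This settles the comparison between the unlabeled-augmented estimator and the labeled-only semiparametric estimator, and it locates the source of strictness.

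It then remains to show that the labeled-only semiparametric estimator is itself at least as efficient as the ordinary estimator $\bar{Y}^{(j,W)}$ --- the subclass analog of the result borrowed from \cite{matloff} and \cite{muller} in the earlier corollary. Note first that both $\hat{q}_{j,W}$ and $\bar{Y}^{(j,W)}$ carry the same denominator $\frac{1}{n}\sum_{i=1}^n I_W(X_i)$, which by (\ref{zz}) and Slutsky's Theorem converges to $P(X \in W)$; the comparison therefore reduces to the two numerators, both estimating $v_{j,W} = E[Y^{(j)} I_W(X)]$. Applying the conditional variance decomposition (with conditioning variable $X$) to $Y^{(j)} I_W(X)$ rewrites $Var[Y^{(j)} I_W(X)]$ as $Var[g(X,\theta_j) I_W(X)] + E[I_W(X)\,Var(Y^{(j)}|X)]$, so that the desired inequality
\[
Var[g(X,\theta_j) I_W(X)] + C_j^T A_j^{-1} C_j \;\leq\; Var[Y^{(j)} I_W(X)]
\]
collapses to
\[
C_j^T A_j^{-1} C_j \;\leq\; E[\,I_W(X)/w(X,\theta_j)\,],
\]
after invoking $w = 1/Var(Y^{(j)}|X)$ from (\ref{wdef}).

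I expect this last inequality to be the main obstacle, since \cite{matloff} only treats the unweighted target $q_j = E[g(X,\theta_j)]$, whereas here the target is weighted by the fixed function $I_W$. My proposed route is a Cauchy--Schwarz argument on the generalized Rayleigh quotient: writing $C_j^T A_j^{-1} C_j = \max_\lambda (\lambda^T C_j)^2 / (\lambda^T A_j \lambda)$ and splitting $I_W(X)\,\lambda^T g'(X,\theta_j)$ as $(I_W(X) w(X,\theta_j)^{-1/2}) \cdot (w(X,\theta_j)^{1/2}\lambda^T g'(X,\theta_j))$, Cauchy--Schwarz bounds the numerator by $E[I_W(X)^2/w(X,\theta_j)] \cdot \lambda^T A_j \lambda$; since $I_W$ is an indicator, $I_W^2 = I_W$, and the bound becomes $E[I_W(X)/w(X,\theta_j)] \cdot \lambda^T A_j \lambda$, which is exactly what is needed. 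This is the subclass generalization of the inequality underlying \cite{matloff}. Chaining the two inequalities --- unlabeled-augmented $\leq$ labeled-only semiparametric $\leq \bar{Y}^{(j,W)}$ --- yields the claimed weak inequality, with strictness furnished by the first link whenever $I_W(X) g(X,\theta_j)$ is not constant. The result follows.
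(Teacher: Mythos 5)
Your proposal is correct, and its overall skeleton matches the paper's: reduce the ratio estimators to their numerators via Slutsky's Theorem and (\ref{zz}), then chain the comparison ``unlabeled-augmented $\leq$ labeled-only semiparametric $\leq$ ordinary.'' The difference is in how the second link is handled. The paper's proof is very terse: it defines $h(t,\theta) = g(t,\theta)I_W(t)$ and simply asserts that the earlier lemma and corollary apply with $h$ in place of $g$, which implicitly leans on the \cite{matloff} inequality for the weighted target $E[Y^{(j)}I_W(X)]$ --- a case not literally covered by that reference, as you correctly observe. You instead prove that link directly: after the conditional variance decomposition reduces the claim to $C_j^T A_j^{-1} C_j \leq E[I_W(X)/w(X,\theta_j)]$, your Cauchy--Schwarz argument on the generalized Rayleigh quotient (splitting $I_W \,\lambda^T g'$ as $I_W w^{-1/2}\cdot w^{1/2}\lambda^T g'$ and using $I_W^2 = I_W$) delivers exactly the needed bound. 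This makes your write-up more self-contained and rigorous than the paper's at the one point where the paper's appeal to earlier results is weakest; the paper's version buys brevity at the cost of that gap. One minor caveat, shared with the paper's own first corollary: your strictness claim comes entirely from the first link, via the reduction $\left(\tfrac{1}{r}-\tfrac{1}{n}\right)P(X\in W)^{-2}\,Var[g(X,\theta_j)I_W(X)]$, so it presupposes $r < n$ (equivalently $\gamma < 1$); this is implicit in the setting but worth a sentence.
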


\begin{proof}
\noindent
Define $h(t,\theta) = g(t,\theta) I_W(t)$.  Theorem \ref{thm1}, applied
to h and $1_W(X) X$ rather than g and $X$, shows that

\begin{equation}
\frac{1}{n} \sum_{i=1}^n g(X_i,\hat{\theta}_j) I_W(X_i)
\end{equation}

\noindent
has asympotic variance less than that of 

\begin{equation}
\frac{1}{r} \sum_{i=1}^r I_W(X_i) Y_j^{(j)}
\end{equation}

\noindent
The result then follows by applying Slutsky's Theorem to the
denominators in (\ref{qhatjw}) and (\ref{ordinary}), and noting
(\ref{zz}).

\end{proof}

\section{The Case of Purely Discrete Features}
\label{discrete}

Suppose $X$ is discrete, taking on b vector values $v_1,...,v_b$.  We
assume here that our labeled data is extensive enough that for each k,
there is some i with $X_i = v_k$.  Then we can find direct estimates of
the $q_j$ without resorting to using a parametric model or smoothing
methods such as nearest-neighbor.

We can write the prior probabilities as

\begin{equation}
\label{prioreqn}
q_j = \sum_{k=1}^b p_k d_{jk}
\end{equation}

\noindent
where $p_k = P(X = v_k)$ and $d_{jk} = P(Y^{(j)} = 1 | $X$ = v_k)$.

Let $1_{ik}$ be the indicator variable for $X_i = v_k$, for i = 1,...,n
and k = 1,...,b.  Then denote the counts of labeled and unlabeled
observations taking the value k by

\begin{equation}
M_{k} = \sum_{i=1}^r 1_{ik} 
\end{equation}

\noindent
and

\begin{equation}
N_{k} = \sum_{i=r+1}^n 1_{ik} 
\end{equation}

\noindent
Also define the count of class-j observations among the labeled data
having feature value k:

\begin{equation}
T_{jk} = \sum_{i=1}^r 1_{ik} Y_i^{(j)}
\end{equation}

\noindent
We can estimate the quantities $d_{jk}$ and $p_k$ by

\begin{equation}
\hat{d}_{jk} = \frac{T_{jk}}{M_k}
\end{equation}

\noindent
and

\noindent
\begin{equation}
\hat{p}_k = \frac{M_k+N_k}{n} 
\end{equation}

\noindent
Our estimates of the class prior probabilities are then, in analogy to 
(\ref{prioreqn}),

\begin{equation}
\label{ourest}
\hat{q}_j 
= \sum_{k=1}^b \hat{p}_k ~ \hat{d}_{jk}
\end{equation}

Note that if we do not use unlabeled data, this estimate reduces to
the standard estimate of $q_j = EY^{(j)}$, the class sample proportion
$\bar{Y}^{(j)}$, in (\ref{ybarj}).

The quantities $M_h$, $N_h$ and $T_{jh}$ have an asympotically
multivariate normal distribution.  Thus from the multivariate version of
Slutsky's Theorem is asympotically equivalent to

\begin{equation}
\label{gjhconst}
\sum_{k=1}^b d_{jk} ~ \hat{p}_k 
\end{equation}

This is a key point in addressing the question as to whether the
unlabeled data provide our estimators with smaller asympotic variance.
Since the only randomness in (\ref{gjhconst}) is in the $\hat{p}_k$, the
larger the sample used to calculate those quantities, the smaller the
variance of (\ref{gjhconst}) will be.  Using the unlabeled data provides
us with that larger sample, and thus (\ref{gjhconst}) will be superior
to (\ref{ybarj}).

Stating this more precisely, the variance of (\ref{gjhconst}) is

\begin{equation}
\label{ourestav}
\frac{1}{n}
\left (
\sum_{k=1}^b d_{jk}^2 ~ p_k(1-p_k) -
2 \sum_{k=1}^b \sum_{s=k+1}^b d_{jk} d_{js} p_k p_s
\right )
\end{equation}

\noindent
Without the unlabeled data, this expression would be the same but with a
factor of 1/r instead of 1/n.  So, again, it is demonstrated that use of
the unlabeled data is advantageous.  

To form confidence intervals and hypothesis tests from (\ref{ourest}),
we obtain a standard error by substituting $\hat{d}_{jh}$ and $\hat{p}_h$
in (\ref{ourestav}), and taking the square root.

\section{Conclusions and Discussion} 
\label{dsc}

This paper has developed methodology for estimating class prior
probabilities in situations in which class membership is
difficult/expensive to determine.  Asymptotic distributions of the
estimators were obtained, enabling users to form confidence intervals
and perform hypothesis tests.  It was proven that use of unlabeled data
does bring an improvement in asymptotic variance, compared to using only
the labeled data.

In the parametric cases, the usual considerations for model fitting
hold.  One should first assess the goodness of fit of the model 
Though many formal test procedures exist, 
with
the large samples often encountered in classification problems it may be
preferable to use informal assessment.  One can, for instance, estimate
the function $E(Y^{(j)} | X = t)$ nonparametrically, using say kernel or
nearest-neighbor methods, and then compare the estimates to those
resulting from a parametric fit.

The paper first built from the semiparametric framework of
\cite{matloff}, and then also derived a fully nonparametric estimator
for the case of purely discrete features.  The latter methodology
requires that the training set be large enough that there is at least
one observation for every possible value of X.  If there are many such
values, or if the training set is not large enough, one might turn to
smoothing methods, such as kernel-based approaches.  Another possibility
in the purely discrete case is to use the log-linear model
[\cite{agresti}], which is fully parametric.  

One might pursue possible further improvement by using the unlabeled data
not just to attain a better estimate for the marginal distribution of
$X$, but also to enhance the accuracy of the estimate of the conditional
distribution of $Y$ given $X$.  As noted in Section \ref{intro}, much
work has been done on this problem in the context of estimating
classification functions.  It would be of interest to investigate the
asymptotic behavior in our present context of estimation of class prior
probabilities.  It would also be useful to investigate whether the
methods of Section \ref{assessing} can be applied to the classification
function problem.



{}

\end{document}